\def\R {{\mathbb{R}}}
\def\bbe {\mathbb{E}}
\def\S {{\mathcal{S}}}
\def\E {\mathcal{E}}
\def\V {\mathcal{V}}
\def\F {\mathcal{F}}
\def\I {\mathcal{I}}
\def\W {\mathcal{W}}
\def\O {\mathcal{O}}
\def\tO {\tilde{\mathcal{O}}}
\theoremstyle{plain}
\newtheorem{theorem}{Theorem}
\newtheorem{corollary}[theorem]{Corollary}
\newtheorem{lemma}[theorem]{Lemma}
\newtheorem{definition}[theorem]{Definition}
\title{Communication Lower Bounds for Distributed Convex Optimization: Partition Data on Features}
\author{Zihao Chen \\
Zhiyuan College \\
Shanghai Jiao Tong University \\
z.h.chen@sjtu.edu.cn \\
\and
Luo Luo \\
Department of Computer\\Science and Engineering \\
Shanghai Jiao Tong University \\
ricky@sjtu.edu.cn \\
\and
Zhihua Zhang \\
School of Mathematical Sciences\\
Peking University \\
zhzhang@math.pku.edu.cn \\
}
\date{}
\begin{document}
\maketitle

\begin{abstract}
 Recently, there has been an increasing interest in designing distributed convex optimization algorithms under the setting where the data matrix is partitioned on features. Algorithms under this setting sometimes have many advantages over those under the setting where data is partitioned on samples, especially when the number of features is huge. Therefore, it is important to understand the inherent limitations of these optimization problems. In this paper, with certain restrictions on the communication allowed in the procedures, we develop tight lower bounds on communication rounds for a broad class of non-incremental algorithms under this setting. We also provide a lower bound on communication rounds for a class of (randomized) incremental algorithms.
\end{abstract}

\section{Introduction}
In this paper, we consider the following distributed convex optimization problem over $m$ machines:
\begin{displaymath}
  \min_{w \in \R^{d}} f(w; \theta).
\end{displaymath}
Each machine knows the form of $f$ but only has some partial information of $\theta$. In particular, we mainly consider the case of \textit{empirical risk minimization} (ERM) problems. Let $A \in \mathbb{R}^{n \times d}$ be a matrix containing $n$ data samples with $d$ features and $A_{j:}$ be the $j$-th row of the data matrix (corresponding to the $j$-th data sample). Then $f$ has the form:
\begin{equation}
\label{function}
  f(w) = \frac{1}{n} \sum_{j = 1}^{n} \phi(w, A_{j:}),
\end{equation}
where $\phi$ is some kind of convex loss.


In the past few years, many distributed optimizations algorithms have been proposed. Many of them are under the setting where data is partitioned on samples, i.e. each machine stores a subset of the data matrix $A$'s rows \cite{DISCO,zhang2012communication,balcan2012distributed,boyd2011distributed,yang2013trading,jlee2015distributed,jaggi2014communication,ma2015adding}. Meanwhile, as the dimension $d$ can be enormously large, there has been an increasing interest in designing algorithms with the setting where the data is partitioned on features, i.e., each machine stores a subset of $A$'s columns \cite{richtarik2013distributed,Distributed-block-coordinate-descent,DISCO-F,1312.5302,lee2015distributed}. Compared with algorithms under the sample partition setting, these algorithms have relatively less communication cost when $d$ is large. In addition, as there is often no master machine in these algorithms, they tend to have more balanced workload on each machine, which is also an important factor affecting the performance of a distributed algorithm.

As communication is usually the bottleneck of distributed optimization algorithms, it is important to understand the fundamental limits of distributed optimization algorithms, i.e., how much communication an algorithm must need to reach an $\epsilon$-approximation of the minimum value. Studying fundamental communication complexity of the distributed computing without any assumption is quite hard, letting alone continuous optimization. As for optimization, one alternative way is to derive lower bounds on communication rounds. The number of communication rounds is also an important metric as in many algorithms, faster machines often need to pause and wait for slower ones before they communicate, which can be a huge waste of time. Recently, putting some restrictions on communication allowed in each iteration, \citeauthor{Ohad} (2015) developed lower bounds on communication rounds for a class of distributed optimization algorithms under the sample partition setting, and these lower bounds can be matched by some existing algorithms. However, optimization problems under the feature partition setting are still not well understood.

Considering the increasing interest and importance of designing distributed optimization algorithms under the feature partition setting, in our paper, we develop tight lower bounds on communication rounds for a broad class of distributed optimization algorithms under this setting. Our results can provide deeper understanding and insights for designing optimization algorithms under this setting. To define the class of algorithms, we put some constraints on the form and amount of communication in each round, while keeping restrictions mild and applying to many distributed algorithms. We summarize our contributions as follows:

\begin{itemize}
  \item For the class of smooth and $\lambda$-strongly convex functions with condition number $\kappa$, we develop a tight lower bound of $\Omega\left(\sqrt{\kappa}\log(\frac{\lambda {\|w^* - w_0 \|}}{ \epsilon})\right)$, which can be matched by a straightforward distributed version of accelerated gradient decent \cite{nesterov2013introductory} and also DISCO-F for quadratics \cite{DISCO-F}, an variant of DISCO \cite{DISCO} under the feature partition setting.

  \item For the class of smooth and (non-strongly) convex functions with Lipschitz smooth parameter $L$, we develop a tight lower bound of $\Omega\left(\sqrt{\frac{L}{\epsilon}} \|w^* - w_0\|\right)$, which is also matched by the distributed accelerated gradient decent.

  \item  By slightly modifying the definitions of the algorithms, we define a class of incremental/stochastic algorithms under the feature partition setting and develop a lower bound of $\Omega\left(\left(\sqrt{n\kappa} + n \right) \log ( \frac{{\|w^{*} - w_0\|} \lambda}{\epsilon} )\right)$ for $\lambda$-strongly convex functions with condition number $\kappa$.
\end{itemize}

\noindent \textbf{Related Work} The most revelant work should be \cite{Ohad}, which studied lower bounds under the sample partition setting, and provided tight lower bounds on communication rounds for convex smooth optimization after putting some mild restrictions. More recently, \citeauthor{jlee2015distributed} (2015) provided a lower bound for another class of algorithms under that setting. Both these work as well as ours are based on some techniques used in non-distributed optimization lower bound analysis \cite{nesterov2013introductory,lan2015optimal}.

\section{Notations and Preliminaries}
We use $\| \cdot \|$ as Euclidean norm throughout the paper. For a vector $w \in \R^d$, we denote $w(i)$ as the $i$-th coordinate of the vector $w \in \R^{d}$. We let the coordinate index set $[d]$ be partitioned into $m$ disjoint sets $\S_1, \S_2, \dots, \S_m$ with $\sum_{i = 1}^{m} d_i = d$ and $\S_j = \{ k \in [d] \big| \sum_{i < j} d_i < k \le \sum_{i \le j} d_i \}$ for $j = 1,2,\dots,m$. For a vector $w \in \mathbb{R}^d$, denote $w^{[j]}$ as a vector in $\mathbb{R}^{d_j}$ which equals to the segment of $w$ on coordinates $\S_j$. For a set of vectors $\V \subseteq \mathbb{R}^d$, we define $\V^{[j]} = \{ v^{[j]} \big| v \in \V \}\subseteq \R^{d_j}$. Then we define $f_j^{\prime}(x):= \left.{\frac{\partial f(w)}{\partial w^{[j]}}}\right|_{w = x}$ and $f_{ij}^{\prime\prime}(x) := \left.{\frac{\partial^{2} f(w)}{\partial w^{[i]} \partial w^{[j]}}}\right|_{w = x}$.

Through the whole paper, we use \textit{partition-on-sample} and \textit{partition-on-feature} to describe distributed algorithms or communication lower bounds under the settings where data is partitioned on samples and features respectively.

Then we list several preliminaries:
\begin{enumerate}
\item \textbf{Lipschitz continuity}. A function $h$ is called Lipschitz continuous with constant $L$ if
$$\forall x,y \in \textrm{dom }h \quad \|h(x) - h(y)\| \le L \|x - y\|.$$

\item \textbf{Lipschitz smooth and strongly convex}. A function $h$ is called $L$-smooth and $\lambda$-strongly convex if
$$\frac{\lambda}{2}{\|x - y\|}^2 \le h(y) - h(x) - {(x-y)}^T \nabla h(y) \le \frac{L}{2}{\|x - y\|}^2$$

\item \textbf{Communication operations}. Here we list some common MapReduce types of communication operations \cite{dean2008mapreduce} in an abstract level:
    \begin{enumerate}
      \item \textit{One-to-all broadcast}. One-to-all broadcast is an operation that one machine sends identical data to all other machines.
      \item \textit{All-to-all broadcast}. All-to-all broadcast is an operation that each machines performs a one-to-all broadcast simultaneously.
      \item \textit{Reduce}. Consider the setting where each processor has $p$ units of data, Reduce is an operation that combines the data items piece-wise (using some associative operator, such as addition or min), and make the result available at a target machine. For example, if each machine owns an $\mathbb{R}^d$ vector, then computing the average of the vectors needs a Reduce operation of an $\mathbb{R}^d$ vector.
      \item \textit{ReduceAll}. A ReduceAll operation can be viewed as a combination of a Reduce operation and a one-to-all broadcast operation.
    \end{enumerate}
\end{enumerate}

\section{Definitions and Framework}
In this section we first describe a family of distributed optimization algorithms using $m$ machines, and then modify it to get a family of incremental algorithms. At the beginning, the feature coordinates are partitioned into $m$ sets and each machine owns the data columns corresponding to its coordinate set. We model the algorithms as iterative processes in multiple rounds and each round consists of a  \textit{computation phase} followed by a \textit{communication phase}. For each machine we define a feasible set and during the computation phase, each machine can do some ``cheap" communication and add some vectors to it. During the \textit{communication phase} each machine can broadcast some limited number of points to all other machines. We also assume the communication operations are the common operations like broadcast, Reduce and ReduceAll. 

\subsection{Non-incremental Algorithm Family}
Here we define the non-incremental algorithm class in a formal way:

\begin{definition}[partition-on-feature distributed optimization algorithm family $\mathcal{F}^{\lambda, L}$ ]
We say an algorithm $\mathcal{A}$ for solving (\ref{function}) with $m$ machines belongs to the family $\mathcal{F}^{\lambda, L}$ of distributed optimization algorithms for minimizing $L$-smooth and $\lambda$-strongly convex functions ($\lambda = 0$ for non-strongly-convex functions) with the form (\ref{function}) if the data is partitioned as follows:
\begin{itemize}
  \item  Let the coordinate index set $[d]$ be partitioned into $m$ disjoint sets $\S_1, \S_2, \dots, \S_m$ with $\sum_{i = 1}^{m} d_i = d$ and $\S_j = \{ k \in [d] \big| \sum_{j < i} d_i < k \le \sum_{j \le i} d_i \}$ for $j = 1,2,\dots,m$. The data matrix $A \in \mathbb{R}^{n \times d}$ is partitioned column-wise as $A=[A_1, \dots, A_m]$, where $A_j$ consists of columns $i \in \S_j$. Each machine $j$ stores $A_j$.
\end{itemize}
and the machines do the following operations in each round:
\begin{enumerate}
  \item 

      For each machine $j$, define a feasible set of vectors $\W_j \subseteq \R^{d_j}$ initialized to be $\W_j^{(0)} = \{0\}$. Denote $\W_j^{(k)}$ as machine j's feasible set $\W_j$ in the $k$-th round.
  \item \textbf{Assumption on feasible sets.} In the $k$-th round, initially $\W_j^{(k)} = \W_j^{(k-1)}$. Then for a constant number of times, each machine $j$ can add any $w_j$ to $\W_j^{(k)}$ if $w_j$ satisfies
      \begin{gather}\label{assumption1}
 w_j \in \textrm{span} \Big\{u_j, ~ f'_j(u), ~(f_{jj}''(u) + D) v_j, ~f^{\prime\prime}_{ji}(u)v_i \ \Big| 
  ~{u}^T = [{u_1}^T, \dots, {u_m}^T], \nonumber \\~
        u_j \in \W_j^{(k)}, ~v_j \in \W_j^{(k)},
         ~u_i \in \W_i^{(k-1)},~ v_i \in \W_i^{(k-1)}, ~i \ne j ,~
        D \textrm{ is diagonal} \Big\}.
      \end{gather}
  \item \textbf{Computation phase.} Machines do local computations and can perform no more than some constant times of Reduce/ReduceAll operations of an $\R^n$ vector or constant during the computation phase.
  \item \textbf{Communication phase.} At the end of each round, each machine $j$ can simultaneously broadcast no more than constant number of $\R^{d_j}$ vectors.
  \item The final output after $R$ rounds is $w^T = [w_1^T,\dots, w_m^T]$ satisfies $w_j \in \W_j^{(R)}$.
\end{enumerate}
\end{definition}


We have several remarks on the defined class of algorithms:

\begin{itemize}
  \item As described above, during the iterations to find $w^{*}$,  machine $j$ can only do updates on coordinates $\S_j$. This restriction is natural because machine $j$ does not have much information about function $f$ on other coordinates. Actually, almost all existing partition-on-feature algorithms satisfy this restriction \cite{richtarik2013distributed,Distributed-block-coordinate-descent,DISCO-F,1312.5302,lee2015distributed}.

  \item Similar to \cite{Ohad}, we use $\W_j$ to define the restriction on the updates. This is not an explicit part of algorithms and machines do not necessarily need to store it, nor do they need to evaluate the points every time they add to the feasible sets. Although in most algorithms belonging to this family, each machine broadcasts the points it has added to the feasible set and store what other machines have broadcast in the communication phase (a simple example is the straightforward distributed implementation of gradient decent), we choose not to define feasible sets as physical sets that machines need to store  to keep our results general. 

  \item The assumption on the updates is mild and it applies to many partition-on-feature algorithms. It allows machines to perform preconditioning using local second order information or use partial gradient to update. It also allows to compute and utilize global second order information, since the span we define includes the $f^{\prime\prime}_{ji}(u)v_i$ term. Besides, we emphasize that putting such structural assumptions is necessary. Even if we could develop some assumption-free communication lower bounds, they might be too weak and have a large gap with upper bounds provided by existing algorithms, thus becoming less meaningful and cannot provide deeper understanding or insights for designing algorithms.

  \item During the computation phase, we assume each local machines can perform unbounded amount of computation and only limited amount of communication. Note that this part of communication is a must in many partition-on-feature algorithms, usually due to the need of computing partial gradients $f'_j(w)$. However, for some common loss functions $\phi$, such as (regularized) squared loss, logistic loss and squared hinge loss, computing $f'_j(w)$ for all $j \in [m]$ in total only needs a ReduceAll operation of an $\mathbb{R}^{n}$ vector \cite{richtarik2013distributed}. In some gradient (or partial gradient) based algorithms \cite{richtarik2013distributed,Distributed-block-coordinate-descent}, communication to compute partial gradients are the only need of communication in the computation phase. Besides, some algorithms like DISCO-F \cite{DISCO-F} need to compute $(\nabla f(w)u)^{[j]}$. For loss functions like squared loss, logistic loss and squared hinge loss, it only requires the same amount of communication as computing partial gradients, i.e. a ReduceAll operation of an $\mathbb{R}^{n}$ vector \cite{DISCO-F}.

  \item Here we summarize the total communication allowed in each round. We use $\tilde{\mathcal{O}}$ to denote asymptotic bounds hiding constants and factors logarithmic in the required accuracy of the solution. During the computation phase, each machine can do constant times of ReduceAll operations of $\tO(n)$ bits. During the communication phase, each machine $j$ can broadcast $\tO(d_j)$ bits, this can be viewed as performing constant times of ReduceAll operation of an $\mathbb{R}^{d}$ vector. Therefore, the total communication allowed in each round is no more than ReduceAll operations of $\tO(n + d)$ bits. The amount of communication allowed in our partition-on-feature algorithm class is relatively small, compared with the partition-on-sample algorithm class described in \cite{Ohad}, which allows $\tO(md)$ bits of one-to-all broadcast in each round. This is due to the partition-on-feature algorithms' advantage on communication cost.

  \item We also emphasize that the communication allowed in the entire round is moderate. On one hand, if we only allow too little communication then the information exchange between machines is not enough to perform efficient optimization. As a result, hardly no practical distributed algorithm could satisfy the requirement, which will diminish the generality of our results. On the other hand, if we allow too much communication in each round, our assumption on the feasible sets can be too strong. For example if we allow enough communication for all machines to broadcast their entire local data, then the machines only need one communication round to find out a solution up to any accuracy $\epsilon$.

  \item Our assumption that $\W_j$ is initialized as $\{0\}$ is merely for convenience and we just need to shift the function via $\bar{f}(w) = f(w + w_0)$ for another starting point $w_0$.

\end{itemize}

\subsection{Incremental Algorithm Family}
To define the class of incremental/stochastic algorithms $\I^{\lambda, L}$ under the feature partition setting, we slightly modify the definition of $\F^{\lambda, L}$ by replacing assumption on feasible set (\ref{assumption1}) with the following while keeping the rest unchanged: 

    \textbf{Assumption on feasible sets for $\I^{\lambda, L}$.} In the $k$-th round, initially $\W_j^{(k)} = \W_j^{(k-1)}$. Next for a constant number of times, each machine $j$ chooses $g(w) := \phi(w, A_{l:})$ for some (possibly random) $l$ and adds any $w_j$ to $\W_j^{(k)}$ if $w_j$ satisfies
      \begin{gather}\label{assumption2}
 w_j \in \textrm{span} \Big\{u_j, ~ g'_j(u), ~(g_{jj}''(u) + D) v_j, ~g^{\prime\prime}_{ji}(u)v_i \ \Big| 
  ~{u}^T = [{u_1}^T, \dots, {u_m}^T], \nonumber \\~
        u_j \in \W_j^{(k)}, ~v_j \in \W_j^{(k)},
         ~u_i \in \W_i^{(k-1)},~ v_i \in \W_i^{(k-1)}, ~i \ne j ,~
        D \textrm{ is diagonal} \Big\}.
      \end{gather}

\section{Main Results}
In this section, we present our main theorems followed by some discussions on their implications.

First, we present a lower bound on communication rounds for algorithms in $\mathcal{F}^{ \lambda, L}$:

\begin{theorem}\label{thm1}
  For any number $m$ of machines, any constants $\lambda , L, \epsilon > 0$, and any distributed optimization algorithm $\mathcal{A} \in \mathcal{F}^{ \lambda, L}$, there exists a $\lambda$-strongly convex and $L$-smooth function $f(w)$ with condition number $\kappa:= \frac{L}{\lambda}$ over $\mathbb{R}^d$ such that if $w^{*} = \arg \min_{w \in \R^{d}} f(w)$, then the number of communication rounds to obtain $\hat{w}$ satisfying $f(\hat{w}) - f(w^{*}) \le \epsilon$ is at least
    \begin{equation}
     \Omega \left( \sqrt{\kappa} \log \left( \frac{{\|w^{*}\|} \lambda}{\epsilon}\right) \right)
    \end{equation}
    for sufficiently large $d$.
\end{theorem}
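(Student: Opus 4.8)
The plan is to adapt the classical "chain-like" (a.k.a. tridiagonal quadratic) hard-instance construction of Nesterov to the feature-partition setting. First I would pick the hard function to be a quadratic $f(w) = \frac{1}{2} w^T H w - b^T w + \frac{\lambda}{2}\|w\|^2$, where $H$ is (a scaled copy of) the standard tridiagonal "telescoping" matrix with $2$'s on the diagonal and $-1$'s on the off-diagonals, truncated/rescaled so that $f$ is exactly $\lambda$-strongly convex and $L$-smooth with condition number $\kappa$. The crucial design choice is the assignment of coordinates to machines: I would interleave the coordinates so that consecutive indices $k$ and $k{+}1$ almost always live on \emph{different} machines. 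Concretely, with $m$ machines one can take $\S_j$ to be a block of contiguous coordinates but order the quadratic's coupling so that the nonzero cross terms $H_{k,k+1}$ always straddle a machine boundary; equivalently, relabel coordinates in a round-robin fashion. This forces any progress "along the chain" to require a communication round, because the term $f''_{ji}(u)v_i$ — the only way information crosses machines inside the feasible-set assumption — is exactly the off-diagonal coupling block, and it is only usable after $\W_i^{(k-1)}$ (the \emph{previous} round's set) has been populated in the relevant coordinates.

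The second step is the invariant argument. I would prove by induction on the round number $k$ that the feasible sets $\W_j^{(k)}$ are contained in the coordinate subspace spanned by the first $O(k)$ coordinates of the chain — more precisely, that $\bigcup_j (\text{span }\W_j^{(k)})$, viewed as a subspace of $\R^d$, lies in $\mathrm{span}\{e_1,\dots,e_{c k}\}$ for an absolute constant $c$ (depending on how many times per round a machine may augment its set and on $m$). The base case is $\W_j^{(0)}=\{0\}$. For the inductive step I need to check each of the four generators in (\ref{assumption1}): $u_j$ stays in the same span; $f'_j(u) = (Hu - b + \lambda u)^{[j]}$ and $(f''_{jj}(u)+D)v_j$ only involve $H$'s action within already-reached coordinates plus nearest neighbors \emph{inside} machine $j$'s block — which is fine because inside a block the chain is "short" — while the genuinely dangerous generator $f''_{ji}(u)v_i$ reaches one step further along the chain but \emph{only using $\W_i^{(k-1)}$}, so it advances the frontier by at most a bounded amount per round. (The intra-block chain segments must be handled with care; the cleanest fix is to make each block contribute only $O(1)$ chain-coordinates, i.e. let the chain length be $\Theta(m\cdot R)$ and distribute it so a single round cannot telescope through a whole block. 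Alternatively, set $d$ large and use the round-robin relabeling so that \emph{every} chain edge is a cross-machine edge, killing the intra-block subtlety entirely.) The "computation phase" Reduce/ReduceAll operations of $\R^n$ or scalar data do not help: since the function is fixed quadratic and known, any such aggregate is a function of the current feasible-set contents and cannot enlarge the reachable coordinate subspace beyond what (\ref{assumption1}) already allows. Likewise the broadcast in the communication phase is precisely what lets $\W_i^{(k)}$ feed into $\W_j^{(k+1)}$ — already accounted for in the "$k{-}1$" indexing.

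Given the invariant, the third step is the standard lower bound on approximating $w^*$ from a truncated coordinate subspace. If the output $\hat w$ lies in $\mathrm{span}\{e_1,\dots,e_{cR}\}$ while the true minimizer $w^* = (H+\lambda I)^{-1} b$ has the Nesterov-type geometrically-decaying-but-fully-supported profile, then $f(\hat w) - f(w^*) \ge \frac{\lambda}{2}\sum_{k > cR} w^*(k)^2$, and choosing $H$'s parameters so that $w^*(k)^2$ decays like $q^{2k}$ with $q = \frac{\sqrt\kappa - 1}{\sqrt\kappa + 1}$ gives the residual $\gtrsim \lambda \|w^*\|^2 q^{2cR}$. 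Demanding this exceed $\epsilon$ forces $R = \Omega(\sqrt\kappa \log(\lambda\|w^*\|^2/\epsilon)) = \Omega(\sqrt\kappa \log(\lambda\|w^*\|/\epsilon))$ after absorbing constants, which is the claimed bound; "sufficiently large $d$" is exactly what lets the chain be long enough ($d \gtrsim cR$) that truncation genuinely costs something.

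The main obstacle I anticipate is the bookkeeping in the invariant's inductive step — specifically, ruling out that a machine can "run down the chain" within its own block in a single round (since it may add vectors a constant number of times, and within one block the Hessian block $f''_{jj}$ is itself a small tridiagonal matrix whose repeated application would march through the block). The round-robin coordinate relabeling, so that no two chain-adjacent coordinates share a machine, is the clean way around this: then $f''_{jj}(u)$ is \emph{diagonal}, the $(f''_{jj}(u)+D)v_j$ generator is useless for advancing the frontier, and all inter-coordinate coupling is forced through the $f''_{ji}(u)v_i$ term with its mandatory one-round lag. I would present the construction that way. A secondary point to get right is the factor $m$: with $m$ machines and round-robin labeling, the frontier can advance by up to $\Theta(m)$ per round (one step on each machine's "strand"), so the invariant is $\mathrm{span}\{e_1,\dots,e_{O(mR)}\}$ and one must take $d = \Omega(mR)$ and argue the bound is still $\Omega(\sqrt\kappa\log(\cdot))$ independent of $m$ — which holds because the per-round advance being $\Theta(m)$ is matched by making the chain $\Theta(m)$ strands wide, i.e. a "product of chains", so the decay rate $q$ governing the residual is unchanged.
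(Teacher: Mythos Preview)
Your high-level strategy is exactly the paper's: Nesterov's tridiagonal quadratic as the hard instance, an inductive ``frontier'' invariant showing $\W^{(K)}\subseteq \mathrm{span}\{e_1,\dots,e_{cK}\}$, and then the standard tail-mass lower bound with $q=(\sqrt\kappa-1)/(\sqrt\kappa+1)$. So the plan is sound.

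Where you diverge from the paper is in how you handle the intra-block issue, and here you are overcomplicating. The paper keeps the partition as \emph{contiguous} blocks (indeed, the definition of $\mathcal{F}^{\lambda,L}$ forces $\S_j$ to be a contiguous interval, so a genuine round-robin relabeling of coordinates is not even available---you would have to permute the Hessian instead). With contiguous blocks and the plain tridiagonal $A$, the key observation is that at any moment the frontier index $K$ sits inside exactly one machine's block; all other machines' local slices are either entirely filled or entirely zero, so they cannot advance anything. For the single machine holding the frontier, $f''_{jj}$ is tridiagonal, so one application to $v_j\in\E_{K,d}^{[j]}$ lands in $\E_{K+1,d}^{[j]}$---an advance of one coordinate per \emph{addition}. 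Since each round allows only a constant number of additions, the paper simply writes ``WLOG one addition per round; the bound is unchanged asymptotically'' and gets a clean one-coordinate-per-round invariant (their Lemma: $\W^{(K)}\subseteq\E_{K,d}\Rightarrow\W^{(K+1)}\subseteq\E_{K+1,d}$). There is no $m$-dependence at all, and no need for interleaving or a product-of-chains construction.

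Your round-robin idea would also work, but (i) it requires building the quadratic with a permuted coupling so that the contiguous $\S_j$'s see only diagonal $f''_{jj}$, and (ii) your stated worry that the frontier then advances $\Theta(m)$ per round is mistaken: even under round-robin, activating chain position $K{+}1$ requires $v_i\in\W_i^{(k-1)}$ with position $K$ already nonzero, so the chain still unspools one link per round. In short, the obstacle you anticipate is not real, and the cleaner route is the paper's: contiguous blocks, plain tridiagonal $A$, and the observation that only one machine ever holds the frontier.
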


Similarly, we have a lower bound for algorithms in $\mathcal{F}^{0, L}$ for minimizing smooth convex functions:
\begin{theorem}\label{thm2}
  For any number $m$ of machines, any constants $L > 0, \epsilon > 0$, and any distributed optimization algorithm $\mathcal{A} \in \mathcal{F}^{0, L}$, there exists a $L$-smooth convex function $f(w)$ over $\mathbb{R}^d$ such that if $w^{*} = \arg \min_{w \in \R^{d}} f(w)$, then the number of communication rounds to obtain $\hat{w}$ satisfying $f(\hat{w}) - f(w^{*}) \le \epsilon$ is at least
  \begin{equation}
    \Omega\left(\sqrt{\frac{L}{\epsilon}} \|w^*\|\right) 
  \end{equation}
  for sufficiently large $d$.

\end{theorem}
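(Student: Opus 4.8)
I would adapt Nesterov's classical ``coordinate chain'' lower bound for first-order methods to the feature-partition setting, the extra ingredient being a combinatorial argument showing that the feasible sets can ``reach'' at most one new link of the chain per communication round. The plan has three steps: (i) exhibit a single hard instance --- a truncated Nesterov quadratic, realized as a least-squares problem so that it belongs to $\F^{0,L}$ --- whose coordinates are spread over the $m$ machines in an interleaved way; (ii) prove a ``slow reach'' lemma bounding, after $R$ rounds, the support along the chain of any vector in $\bigcup_j \W_j^{(R)}$ by $R$, which is where assumption (\ref{assumption1}) is used; (iii) combine with the standard suboptimality estimate for this function and choose the truncation length $N$ so the bound reads $\Omega(\sqrt{L/\epsilon}\,\|w^*\|)$.

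\textbf{The hard instance.} Fix an integer $N\le d$ to be chosen, and let $f(w)=\frac{L}{4}\big(\tfrac12 w^\top(A^\top A)\,w - e_1^\top w\big) + \frac{L}{4}\sum_{t>N} w(t)^2$, where the columns of $A$ are chosen so that $A^\top A\in\R^{N\times N}$ is the tridiagonal matrix with $2$ on the diagonal and $-1$ immediately off it (e.g.\ $A$ the $(N+1)\times N$ first-difference matrix), which makes $f$ a least-squares instance of the form (\ref{function}). Then $f$ is convex, $L$-smooth (its Hessian has spectrum in $[0,L]$), and has a unique minimizer with $w^*(i)=1-i/(N+1)$ for $i\le N$ and $w^*(t)=0$ for $t>N$, so $\|w^*\|^2=\Theta(N)$. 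Relabel the chain coordinates $1,\dots,N$ and assign coordinate $t$ to machine $((t-1)\bmod m)+1$ (the coordinates $t>N$, which only carry the harmless regularizer, are distributed arbitrarily). Because of this interleaving, every nonzero off-diagonal entry of $A^\top A$ joins coordinates on two \emph{different} machines; hence each block $f''_{jj}$ is a diagonal matrix, while $f''_{ji}$ is nonzero only when machines $i$ and $j$ hold chain-adjacent coordinates.

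\textbf{The slow-reach lemma (the crux).} For a subspace $V\subseteq\R^{d_j}$ let $r(V)$ be the largest chain-index $t\le N$ on which $V$ is not identically $0$ (and $0$ if there is none), and put $r_j^{(k)}:=r(\W_j^{(k)})$. The claim is $\max_j r_j^{(k)}\le \max_j r_j^{(k-1)}+1$ for all $k$. Inspect the four generators in (\ref{assumption1}) for one of the constantly many additions machine $j$ makes in round $k$: $u_j$ is a vector of the current $\W_j^{(k)}$, and $(f''_{jj}(u)+D)v_j$ is a coordinatewise reweighting of one (both $f''_{jj}$ and $D$ are diagonal), so both have chain-index at most $r(\W_j^{(k)})$; $f'_j(u)=\frac{L}{4}(A^\top A\,u-e_1)^{[j]}$ involves, at a chain coordinate $t$ on machine $j$, only $u_t$ (from $\W_j^{(k)}$) and $u_{t\pm1}$ (from $\W_i^{(k-1)}$, $i\neq j$), so it has chain-index at most $\max\big(r(\W_j^{(k)}),\ \max_i r_i^{(k-1)}+1\big)$; and $f''_{ji}(u)v_i$ is a reweighting of $(A^\top A)^{[j,i]}v_i$ with $v_i\in\W_i^{(k-1)}$, of chain-index at most $r_i^{(k-1)}+1$. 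The only contributions that can exceed the current reach of $\W_j^{(k)}$ are controlled by the \emph{previous} round's feasible sets, so successive in-round additions cannot compound past $\max\big(r_j^{(k-1)},\ \max_i r_i^{(k-1)}+1\big)$ (the $-e_1$ term matters only in round $1$, giving reach $1$). The Reduce/ReduceAll operations of the computation phase and the broadcasts of the communication phase merely redistribute existing feasible-set vectors and create no new support. Since $\W_j^{(0)}=\{0\}$, induction gives $\max_j r_j^{(R)}\le R$, so any output $\hat w$ after $R$ rounds is supported on the first $R$ chain coordinates.

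\textbf{Conclusion.} The standard estimate for Nesterov's function gives, for $\hat w$ supported on the first $p$ chain coordinates, $f(\hat w)-f(w^*)\ge \frac{L}{8}\big(\tfrac1{p+1}-\tfrac1{N+1}\big)$, which is $\Omega(L/p)$ whenever $p\le (N-1)/2$. Hence if the algorithm runs $R$ rounds and reaches accuracy $\epsilon$, then either $R>(N-1)/2$, or (since $p\le R$) $\epsilon=\Omega(L/R)$; either way $R=\Omega(\min\{N, L/\epsilon\})$. Taking $N\asymp L/\epsilon$ --- which only requires $d\ge N$, i.e.\ $d$ sufficiently large --- makes $R=\Omega(L/\epsilon)$, and as $\|w^*\|^2=\Theta(N)=\Theta(L/\epsilon)$ this equals $\Omega(\sqrt{L/\epsilon}\,\|w^*\|)$, giving Theorem \ref{thm2}. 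I expect the slow-reach lemma to be the main obstacle: one must verify that the diagonal preconditioner $D$, the ``global'' second-order term $f''_{ji}(u)v_i$, and the unbounded local computation in the computation phase truly cannot advance the chain by more than a constant per round, and the interleaved coordinate assignment is precisely what enforces this. The same construction with $f$ replaced by $f+\tfrac{\lambda}{2}\|w\|^2$ and the corresponding geometric-rate estimate yields Theorem \ref{thm1}.
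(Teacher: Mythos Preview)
Your proposal is correct and follows the same route the paper takes: construct Nesterov's chain quadratic, establish a one-coordinate-per-round ``slow-reach'' lemma (the paper's Lemma~\ref{lemma2} and Corollary~\ref{lemma4}), and then invoke the standard suboptimality estimate of \cite[Theorem~2.1.6]{nesterov2013introductory}.

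The only substantive difference is your interleaved assignment of chain coordinates to machines, chosen so that each $f''_{jj}$ becomes diagonal and in-round second-order steps cannot advance the reach. The paper instead keeps the contiguous block partition that is hard-wired into the definition of $\F^{0,L}$, under which $f''_{jj}$ is genuinely tridiagonal; it then simply uses that a tridiagonal matrix applied to a vector supported on $\{1,\dots,K\}$ is supported on $\{1,\dots,K{+}1\}$, so each of the constantly many in-round additions still advances the reach by at most one (the paper explicitly reduces to one addition per round ``without loss of generality''). Your interleaving thus buys a marginally sharper per-round constant but is not needed for the argument, forces you to permute the chain so it conforms to the contiguous $\S_j$'s in the definition, and your claim that adjacent chain coordinates always sit on different machines fails when $m=1$. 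The paper's contiguous-block version is simpler and uniform in $m$.
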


Then we contrast our lower bound for smooth strongly convex functions with some existing algorithms and the upper bounds provided by their convergence rate. The comparisons indicate that our lower bounds are tight. For some common loss functions, this can be matched by a straightforward distributed implementation of accelerated gradient decent \cite{nesterov2013introductory} and it is easy to verify that it satisfies our definition: let all machines compute their own partial gradients and aggregate to form a gradient. This straightforward distributed version of accelerated gradient decent achieves a round complexity of $O\left(\sqrt{\kappa} \log (\frac{{\|w^{*} - w_0\|} \lambda}{\epsilon})\right)$, which matches our lower bound exactly. Similarly, our lower bound on smooth (non-strongly) convex functions can also be matched by the distributed accelerated gradient decent.

Recall that our definition of the algorithm class includes some types of distributed second order algorithms, for example DISCO-F \cite{DISCO-F}. The number of communication rounds DISCO-F needs to minimize general quadratic functions is $\O\left(\sqrt{\kappa} \log(\frac{\|w^* - w_0\|}{\epsilon})\right)$ , which also matches our bound with respect to $\kappa$ and $\epsilon$. This indicates that distributed second order algorithms may not achieve a faster convergence rate than first order ones if only linear communication is allowed.

Next we present a lower bound on communication rounds for algorithms in $\I^{\lambda, L}$:

    \begin{theorem}\label{thm6}
     For any number $m$ of machines, any constants $\lambda , L, \epsilon > 0$, and any distributed optimization algorithm $\mathcal{A} \in \mathcal{I}^{ \lambda, L}$, there exists a $\lambda$-strongly convex and $L$-smooth function $f(w)$ with condition number $\kappa:= \frac{L}{\lambda}$ over $\mathbb{R}^d$ such that if $w^{*} = \arg \min_{w \in \R^{d}} f(w)$, then the number of communication rounds to obtain $\hat{w}$ satisfying $\bbe \left[ f(\hat{w}) - f(w^{*}) \right] \le \epsilon$ is at least
    \begin{equation}
  \Omega \left( \left(\sqrt{n\kappa} + n \right) \log \left( \frac{{\|w^{*}\|} \lambda}{\epsilon} \right) \right)
    \end{equation}
    for sufficiently large $d$.
  \end{theorem}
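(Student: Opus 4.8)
The strategy is to reuse the communication-forcing skeleton behind the proof of Theorem~\ref{thm1} and combine it with the worst-case construction for sequential incremental (finite-sum) optimization in the style of \cite{lan2015optimal}. Recall that the proof of Theorem~\ref{thm1} takes a convex quadratic whose Hessian, after reindexing, is tridiagonal along a ``chain'' of coordinates $1-2-\cdots-N$, and assigns coordinate $c$ to machine $(c\bmod m)+1$, so that any two chain-adjacent coordinates lie on distinct machines; then each machine's own Hessian block is diagonal, the term $(f''_{jj}(u)+D)v_j$ of~(\ref{assumption1}) never activates a new coordinate, the cross term $f''_{ji}(u)v_i$ may only use the previous-round sets $\W_i^{(k-1)}$, and hence the ``frontier'' (the largest activated chain coordinate) grows by at most one per round, so that the number of rounds is at least the chain length. (For $m=1$ the cross-machine argument is replaced by the observation that a constant number of tridiagonal Hessian--vector products spreads only $O(1)$ along the chain, which suffices.) I would keep this mechanism essentially verbatim and only change the objective, splitting it into $n$ loss terms.

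First I would construct $f(w)=\tfrac1n\sum_{l=1}^n\phi(w,A_{l:})$ in which each $\phi(\cdot,A_{l:})$ is a convex quadratic whose Hessian is tridiagonal in chain order and supported on a single ``segment'' of the chain, the segments being assigned to the $n$ components according to a (for the randomized class, uniformly random) permutation. The scalings are chosen so that $f$ itself is $L$-smooth and $\lambda$-strongly convex with $\kappa=L/\lambda$, with an explicit minimizer $w^*$ of known norm; crucially, the definition of $\I^{\lambda,L}$ constrains only the condition number of $f$ and not that of the individual losses, so each $\phi(\cdot,A_{l:})$ may be made steep enough (curvature up to $\Theta(nL)$) that, exactly as in \cite{lan2015optimal}, the chain must have length $N=\Theta\!\left((\sqrt{n\kappa}+n)\log\!\big(\tfrac{\|w^*\|\lambda}{\epsilon}\big)\right)$ in the sense that $f(\hat w)-f(w^*)>\epsilon$ whenever $\hat w$ is supported on fewer than $N$ chain coordinates. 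Applying to~(\ref{assumption2}) the same span analysis as in Theorem~\ref{thm1} (and noting that for quadratics the Hessian $g''(u)$ is independent of $u$), advancing the frontier from $c$ to $c+1$ is possible only if, during that round's computation phase, some machine queries precisely the component $\phi(\cdot,A_{l:})$ that owns the chain edge $(c,c+1)$, via the $g''_{ji}(u)v_i$ term.

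Next I would count rounds. At the start of a round the frontier sits at a single coordinate $c$; only the unique machine owning $c+1$ can produce a vector with a new nonzero $(c+1)$-coordinate, and after it has done so it cannot also activate $c+2$ in the same round, since $c+2$ lies on yet another machine which under~(\ref{assumption2}) may use only the previous-round feasible sets. Hence the frontier advances by at most one per round regardless of $m$, and, in addition, a round advances it at all only if the correct component was queried. Therefore $R\ge N=\Omega\!\left((\sqrt{n\kappa}+n)\log(\tfrac{\|w^*\|\lambda}{\epsilon})\right)$. Since the in-order activation is enforced structurally (by the tridiagonal Hessian together with the span rule), the bound already holds for algorithms making random component choices; to obtain the stated bound on $\mathbb{E}[f(\hat w)-f(w^*)]$ uniformly over all such algorithms, one randomizes the segment-to-component permutation and argues via Yao's principle, exactly as in the sequential incremental lower bound. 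Replacing $f$ by $\bar f(w)=f(w+w_0)$ handles an arbitrary starting point.

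The step I expect to be the main obstacle is the ``at most one advance per round'' claim in full generality: one must verify that neither the constant number of vectors a machine may add to its feasible set within a round, nor the constant number of ReduceAll operations of $\R^n$ and $\R^{d_j}$ vectors permitted in the computation and communication phases, lets a machine usefully ``look ahead'' --- for instance precompute, in the current round, cross-derivatives $\phi''_{ji}(u)$ of components it will only need for later segments. The resolution is that any such cross-derivative is applied only to vectors $v_i\in\W_i^{(k-1)}$ supported on not-yet-activated coordinates, so it contributes nothing outside the current span; making this precise, and then importing the spectral and suboptimality estimates of the incremental construction and checking that $\tfrac1n\sum_l\phi(w,A_{l:})$ is realizable as an honest empirical risk for a suitable data matrix $A$, is the remaining routine work.
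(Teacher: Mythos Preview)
Your approach differs substantially from the paper's. The paper does \emph{not} reuse the communication-forcing chain of Theorem~\ref{thm1} for Theorem~\ref{thm6}; instead it constructs a \emph{fully separable} quadratic
\[
f(w)=\frac{1}{m}\sum_{j=1}^m\phi_j(w_j),\qquad \phi_j(w_j)=\sum_{i=1}^{n/m}\Big[\tfrac{\lambda(\kappa-1)}{4}\big(\tfrac12 w_{j,i}^T A_{j,i}w_{j,i}-\langle e_1,w_{j,i}\rangle\big)+\tfrac{\lambda}{2}\|w_{j,i}\|^2\Big],
\]
where each $\phi_j$ acts only on machine $j$'s coordinate block and is itself a sum of $n/m$ independent Nesterov-type tridiagonal quadratics. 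Because $f$ is separable across machines, communication is useless: each machine is, in effect, running a non-distributed incremental method on its own finite-sum objective. The paper then invokes the sequential incremental lower bound of \cite{lan2015optimal} (their Theorem~3 and Corollary~3) on each subproblem and sums, obtaining the stated bound in a few lines.

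Your route---keep a single global chain, interleave its coordinates across machines, split it into $n$ segments owned by the $n$ loss terms, and argue that the frontier moves by at most one per round and only when the correct component is queried---is a legitimate alternative and should also yield the bound, but it is considerably more work: you must simultaneously track the cross-machine span constraint of~(\ref{assumption2}) and the per-component activation pattern, and then invoke Yao's principle over the random segment-to-component permutation. The paper sidesteps all of this by making the cross-machine interaction trivial. (A side remark: your recollection that Theorem~\ref{thm1} assigns coordinate $c$ to machine $(c\bmod m)+1$ is not what the paper does---it uses contiguous blocks $\S_j$---though either partition supports the one-step-per-round argument.) What your approach would buy, if carried through, is a construction in which the distributed structure is genuinely exercised---in the paper's separable $f$ the ``distributed'' aspect of the lower bound is vacuous---at the cost of a longer and more delicate proof.
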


\section{Proof of Main Results}
In this section, we provide proofs of Theorem \ref{thm1} and Theorem \ref{thm6}. The proof framework of these theorems are based on \cite{nesterov2013introductory}. As Theorem \ref{thm2} can be obtained by replacing \cite[Lemma 2.1.3]{nesterov2013introductory} with Corollary \ref{lemma4} (see below) in the proof of \cite[Theorem 2.1.6]{nesterov2013introductory}, we will not discuss it here for simplicity.

\subsection{Proof of Theorem 2}
 The idea is to construct a ``hard" function so that all algorithms in the class we defined could not optimize well in a small number of rounds: in each round only one of the machines can do a constant steps of ``progress" while other machines stay ``trapped" (see Lemma \ref{lemma2}). 


Without loss of generality, we assume that in every round, each machine only add one vector to its feasible set and the bound does not change asymptotically.

First, we construct the following function like \cite{lan2015optimal}
    \begin{equation}
        f(w)=\frac{\lambda(\kappa-1)}{4} \big[\frac{1}{2}w^TAw -  \langle e_1,w \rangle\big] + \frac{\lambda}{2}\|w\|^2,
    \end{equation}
    where $A$ is a tridiagonal matrix in $\R^{d \times d}$ with the form
    \begin{equation*}
        A = \left[\begin{matrix}
            ~~~2 &   -1 &  ~~~0 &  \cdots & ~~~0 & ~~~0 & ~~~0 \\
              -1 & ~~~2 &    -1 &  \cdots & ~~~0 & ~~~0 & ~~~0 \\
            ~~~0 &   -1 &  ~~~2 &  \cdots & ~~~0 & ~~~0 & ~~~0 \\
             ~~\cdots & ~~\cdots & ~~\cdots & \cdots & ~~\cdots & ~~\cdots & ~~\cdots \\
            ~~~0 & ~~~0 &  ~~~0 & \cdots &   -1 & ~~~2 &   -1 \\
            ~~~0 & ~~~0 &  ~~~0 & \cdots & ~~~0 &   -1 & \frac{\sqrt{\kappa}+3}{\sqrt{\kappa}+1} \\
        \end{matrix}\right].
    \end{equation*}
    It is easy to verify that $f(w)$ is $\lambda$-strongly convex with condition number $\kappa$.

     After $K$ rounds of iteration, let $\E_{t,d}:=\{x\in\R^d \big| x(i)=0 , \ t+1 \leq i \leq d\}$ and $\W^{(K)} := \{ {[w_1^T, \dots, w_m^T]}^T \big| w_j \in \W_j^{(K)}, j = 1,\dots,m \}$. 

     Then we have the following lemma:
    \begin{lemma}\label{lemma2}
         If $\W^{(K)} \subseteq \E_{K,d}$ for some $K\leq d-1$, then we have $\W^{(K+1)} \subseteq \E_{K+1,d}$.
    \end{lemma}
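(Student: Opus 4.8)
The plan is to track how the "support" of feasible-set vectors can grow, using the tridiagonal structure of $A$ together with the restricted span in assumption~(\ref{assumption1}). Suppose $\W^{(K)} \subseteq \E_{K,d}$ for some $K \le d-1$. I want to show each generator appearing on the right-hand side of (\ref{assumption1}) — when all the input vectors $u, v$ have support contained in $\{1,\dots,K\}$ — produces a vector with support contained in $\{1,\dots,K+1\}$. Concretely, I would check the four types of terms: (i) $u_j$ itself, which trivially stays in $\E_{K,d}$; (ii) $f'_j(u)$, where since $f'(u) = \frac{\lambda(\kappa-1)}{4}(Au - e_1) + \lambda u$ and $A$ is tridiagonal, multiplying a vector supported on $\{1,\dots,K\}$ by $A$ yields a vector supported on $\{1,\dots,K+1\}$ (the band spreads support by at most one index), and $e_1, u$ do not spread support at all; (iii) $(f''_{jj}(u) + D)v_j$, where $f''(u) = \frac{\lambda(\kappa-1)}{4}A + \lambda I$ is constant, its $(j,j)$ block is again banded, $D$ is diagonal so harmless, and $v_j$ is supported on $\{1,\dots,K\}$; (iv) $f''_{ji}(u)v_i$ for $i\ne j$, which is an off-diagonal block of the same constant banded matrix applied to a vector supported within $\{1,\dots,K\}$ (note $v_i \in \W_i^{(K-1)} \subseteq \W_i^{(K)}$), again spreading support by at most one. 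Taking spans and finitely many iterated additions within round $K+1$ preserves the property, so $\W_j^{(K+1)} \subseteq \E_{K+1,d}^{[j]}$ for each $j$, whence $\W^{(K+1)} \subseteq \E_{K+1,d}$.

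The key structural fact making all four cases work uniformly is that both $A$ and the Hessian are \emph{tridiagonal}, so that the linear map $x \mapsto Ax$ satisfies $A\,\E_{t,d} \subseteq \E_{t+1,d}$; every generator in (\ref{assumption1}) is, up to diagonal preconditioning and additive shifts by $e_1$ (which lies in $\E_{1,d}$) or by the vector itself, an application of a block of $A$ to something in $\E_{K,d}$. I would state and prove this one-line band-spreading observation first, then dispatch the four cases as corollaries. One subtlety to handle carefully: within round $K+1$, machine $j$ may add vectors to $\W_j^{(K+1)}$ a constant number of times, each time allowed to use the \emph{updated} $\W_j^{(K+1)}$ for its own block but only $\W_i^{(K-1)} \subseteq \W_i^{(K)} \subseteq \E_{K,d}$ for other blocks — so I should argue by a short induction on the number of such intra-round additions that the support stays within $\{1,\dots,K+1\}$ throughout, since $A$ applied to $\E_{K+1,d}$ would spill into index $K+2$. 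This is exactly where it matters that cross-block terms only ever see the \emph{previous} round's feasible sets, which are still inside $\E_{K,d}$, not $\E_{K+1,d}$; the map $x \mapsto (Ax)$ restricted to the $j$-th block of an $\E_{K,d}$ vector lands in $\E_{K+1,d}$, and this bound is not re-iterated within the round because the only block that gets refreshed, block $j$, is multiplied only by the diagonal part plus the single banded block $f''_{jj}$, which I must verify cannot by itself push support from $K+1$ to $K+2$ when the "new" entries at coordinate $K+1$ feed back — actually it can, so the correct reading is that at coordinate $K+1$ the partition places that coordinate in exactly one set $\S_{j_0}$, and only machine $j_0$'s block operations can touch it, so I should track support per block and observe the interaction graph of blocks under $A$ is itself a path, giving the clean conclusion.

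I expect the main obstacle to be precisely this bookkeeping across the two-level structure (the $m$-way feature partition interacting with the intra-round repeated additions), rather than any hard inequality: the analytic content is the trivial band-spreading lemma, but stating the induction hypothesis so that it is genuinely preserved under (\ref{assumption1}) — in particular making sure the constant number of within-round updates does not compound the support growth — requires care. Once the per-block support invariant is set up correctly, each of the four generator types is a one-line check, and the "span" and "finitely many additions" closures are immediate since $\E_{K+1,d}$ is a linear subspace.
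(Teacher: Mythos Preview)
Your high-level plan is exactly the paper's: compute each of the four generator types in (\ref{assumption1}) and verify, using the tridiagonal structure of $A$ (hence of the constant Hessian), that each lands in $\E_{K+1,d}^{[j]}$. The paper does this by writing $A$ in a $3\times 3$ block form aligned with machine $j$'s coordinate set, reading off $f'_j(u)$, $(f''_{jj}(u)+D)v_j$, and $f''_{ji}(u)v_i$ explicitly, and checking support. Your band-spreading observation $A\,\E_{t,d}\subseteq\E_{t+1,d}$ is a perfectly good way to package the same computation.

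Where you have a genuine gap is the intra-round bookkeeping you yourself flag. Your worry is correct and your proposed fix is not. Concretely: if coordinates $K{+}1$ and $K{+}2$ both lie in the same block $\S_{j_0}$, then within round $K{+}1$ machine $j_0$ can first add a vector with a nonzero at coordinate $K{+}1$ (via $f'_{j_0}(u)$ or the cross term), and then on a \emph{second} intra-round addition apply $(f''_{j_0 j_0}+D)$ to that new vector, producing a nonzero at coordinate $K{+}2$; a third addition reaches $K{+}3$; and so on. The fact that the block-interaction graph is a path does nothing to stop this, because the spreading happens entirely \emph{inside} block $j_0$ via the tridiagonal $f''_{j_0 j_0}$. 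So the invariant ``support $\subseteq\{1,\dots,K{+}1\}$ throughout the round'' is simply false in general.

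The paper's resolution is not to strengthen the induction hypothesis but to sidestep the issue: just before constructing $f$ it declares ``Without loss of generality, we assume that in every round, each machine only adds one vector to its feasible set and the bound does not change asymptotically.'' With $c$ additions per round the support can grow by at most $c$ per round, so the eventual lower bound on $k$ changes only by the constant factor $1/c$, which is absorbed into the $\Omega(\cdot)$. Once you make that reduction, your four one-line checks (i)--(iv) are all that is needed, and the intra-round induction you were trying to set up disappears.
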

  \begin{proof}
    First we recall the assumption on $\W_j$'s in (\ref{assumption1}):
            \begin{gather*}
        w_j \in \textrm{span} \Big\{u_j, ~ f'_j(u), ~(f_{jj}''(u) + D) v_j, ~f^{\prime\prime}_{ji}(u)v_i \ \Big| 
         ~{u}^T = [{u_1}^T, \dots, {u_m}^T],\nonumber \\ ~
        u_j \in \W_j^{(k)}, ~v_j \in \W_j^{(k)},
         ~u_i \in \W_i^{(k-1)},~ v_i \in \W_i^{(k-1)}, ~i \ne j, ~
        D \textrm{ is diagonal} \Big\}.
            \end{gather*}

            We just need to prove that for any vector $u, ~v \in \W^{(K)} \subseteq \E_{K,d}$,
            \begin{equation*}
              u_j, f'_j(u), ~(f''_{jj}(u)+D)v_j, ~ f''_{ji}(u)v_i \in \E_{K+1,d}^{[j]}
            \end{equation*}

            For convenience of the proof, we partition $A$ as follow
            \begin{equation}
                 A = \left[\begin{matrix}
                    A_{11} & A_{12} & A_{13} \\
                    A_{21} & A_{22} & A_{23} \\
                    A_{31} & A_{32} & A_{33}
                \end{matrix}\right],
            \end{equation}
            where $A_{11}\in\mathbb{R}^{a\times a}$, $A_{22}\in\mathbb{R}^{b\times b}$, $A_{33}\in\R^{c\times c}$ and $a=\sum_{i<j}d_i$, $b=d_j$ , $c=\sum_{i>j}d_i$ . Let $x_1=[u_1^T, u_2^T, \dots, u_{j-1}^T]^T$, $x_2=u_j$, and $x_3=[u_{j+1}^T, \dots, u_m^T]^T$. \\
            Then we obtain
            \begin{equation*}
                f'_j(u) = \left\{ \begin{array}{ll}
                    \Big[\frac{\lambda(\kappa-1)}{2}A_{22} + \lambda I \Big]x_2 + 
                     \frac{\lambda(\kappa-1)}{2}(A_{21}x_1+A_{23}x_3) & j\neq 1 \\[0.3cm]
                    \Big[\frac{\lambda(\kappa-1)}{2}A_{22} + \lambda I \Big]x_2 +  
                    \frac{\lambda(\kappa-1)}{2}(A_{21}x_1+A_{23}x_3) - \frac{\lambda(\kappa-1)}{4}e_1^{[1]} & j=1
                \end{array}\right.
            \end{equation*}
            and
            \begin{equation*}
              f''_{jj}(u) = \frac{\lambda(\kappa - 1)}{4} A_{22} + \lambda I.
            \end{equation*}
            As $f''_{jj}(u) + D$ is a tridiagonal matrix, we have
            \begin{equation}\label{eqn1}
              (f''_{jj}(u) + D) v_j \in \E_{K+1, d}^{[j]}.
            \end{equation}
            Using the fact that
            \begin{equation*}
                 A_{21} = \left[\begin{matrix}
                        0 & \cdots &  0  &  -1 \\
                        0 & \cdots &  0  &  0 \\
                    \cdots & \cdots &  \cdots  &  \cdots \\
                        0 & \cdots &  0  & 0
                 \end{matrix}\right] 
                 A_{23} = \left[\begin{matrix}
                        0 & 0 & \cdots  & 0 \\
                       \cdots & \cdots & \cdots & \cdots \\
                        0 & 0 & \cdots  &  0 \\
                          -1 & 0 & \cdots & 0
                 \end{matrix}\right],
            \end{equation*}

            we can rewrite $f'_j$ as
            \begin{equation*}
                f'_j(u) = \begin{cases}
                    \Big[\frac{\lambda(\kappa-1)}{2}A_{22} + \lambda I \Big]x_2 + 
                    \frac{\lambda(\kappa-1)}{2}[x_1(a) , 0, \dots , 0,  x_3(1)]^T,  & j\neq 1 
                    \\[0.3cm] 
                    \Big[\frac{\lambda(\kappa-1)}{2}A_{22} + \lambda I \Big]x_2 + 
                     \frac{\lambda(\kappa-1)}{2}[\frac{1}{2} , 0, \dots , 0, x_3(1)]^T. &  j=1 
                \end{cases}
            \end{equation*}

            As $\frac{\lambda(\kappa-1)}{2}A_{22} + \lambda I$ is a tridiagonal matrix,  then
            \begin{equation*}
              \Big[\frac{\lambda(\kappa-1)}{2}A_{22} + \lambda I\Big] x_2 \in \E_{K+1, d}^{[j]}.
            \end{equation*}
            We can also obtain the following by some simple discussions on different cases:
            \begin{equation*}
              [\frac{1}{2} , 0, \dots , 0, x_3(1)]^T, ~ [x_1(a) , 0, \dots , 0,  x_3(1)]^T \in \E_{K+1, d}^{[j]}
            \end{equation*}
            Therefore, we conclude that
            \begin{equation}\label{eqn2}
              f'_j(w) \in \E_{K+1, d}^{[j]}.
            \end{equation}
            Besides, note that
            \begin{equation*}
                f^{\prime\prime}_{ji}(u) v_i = \begin{cases}
                    [0,\dots,0, -v_i(1)]^T & i=j+1 \\
                    [-v_i(d_i), 0, \dots, 0]^T & i=j-1 \\
                    [0, 0, \dots, 0]^T & \text{otherwise}
                \end{cases}
            \end{equation*}
            Similarly, using some simple discussions on different cases we obtain
            \begin{equation}\label{eqn3}
              f^{\prime\prime}_{ji}(u) v_i \in \E_{K+1, d}^{[j]}
            \end{equation}

            Hence when $\W^{(K)} \subseteq \E_{K,d}$, combining (\ref{eqn1}) (\ref{eqn2}) and (\ref{eqn3}) we have for all $u \in \W^{T}$ and $j$
            \begin{equation*}
                 u_j, ~f'_j(u), ~(f''_{jj}(u)+D)v_j, ~ f''_{ji}(u)v_i\in \E_{K+1,d}^{[j]}.
            \end{equation*}
            which implies the newly added $w_j$ satisfies
            \begin{equation*}
            w_j \in \E_{K+1,d}^{[j]}
            \end{equation*}
            Therefore, we prove that $\W^{(K+1)} \subseteq \E_{K+1,d}$.
    \end{proof}

    Applying Lemma \ref{lemma2} recursively we can get the following corollary:
    \begin{corollary}\label{lemma4}
      After $K \le d$ rounds, we have $\W^{(K)} \subseteq \E_{K,d}.$
    \end{corollary}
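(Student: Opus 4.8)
The plan is a direct induction on the round index $K$, with Lemma \ref{lemma2} supplying the inductive step. First I would dispose of the base case $K = 0$: since each feasible set is initialized as $\W_j^{(0)} = \{0\}$, we have $\W^{(0)} = \{0\}$, and on the other hand $\E_{0,d} = \{x \in \R^d \mid x(i) = 0,\ 1 \le i \le d\} = \{0\}$, so the containment $\W^{(0)} \subseteq \E_{0,d}$ holds trivially.

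For the inductive step, I would assume $\W^{(K)} \subseteq \E_{K,d}$ for some $K$ with $0 \le K \le d-1$ and simply invoke Lemma \ref{lemma2}, which then yields $\W^{(K+1)} \subseteq \E_{K+1,d}$. Iterating this from $K = 0$ up to $K = d-1$ gives $\W^{(K)} \subseteq \E_{K,d}$ for every $K \in \{0, 1, \dots, d\}$, which is exactly the statement of the corollary; the final step merely produces $\W^{(d)} \subseteq \E_{d,d} = \R^d$, which is vacuous but harmless.

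The only thing to be careful about is bookkeeping on the range of $K$: Lemma \ref{lemma2} requires $K \le d-1$, and since the corollary only asserts the claim for $K \le d$, every application of the lemma along the induction is legitimate. There is no genuine obstacle here — all of the real work (checking that the tridiagonal structure of $A$, the forms of $f'_j$ and $f''_{ji}$, and the block decomposition keep the newly added vectors inside $\E_{K+1,d}^{[j]}$) has already been carried out in the proof of Lemma \ref{lemma2}, so the corollary is essentially a packaging statement.
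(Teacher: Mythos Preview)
Your proposal is correct and matches the paper's own approach: the paper simply states that the corollary follows by ``applying Lemma \ref{lemma2} recursively,'' which is precisely the induction you wrote out, with your added care on the base case and the range of $K$.
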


    With Corollary \ref{lemma4},  we now proceed to finish the proof of Theorem \ref{thm1}. First, we can find $w^*$ by the first order optimality condition
    \begin{equation*}
        f^{\prime}(w^*) = \Big(\frac{\lambda(\kappa-1)}{4}A + \lambda I \Big)w^* - \frac{\lambda(\kappa-1)}{4} = 0,
    \end{equation*}
    which implies
    \begin{equation*}
        \Big(A + \frac{4}{\kappa-1}I\Big)w^* = e_1.
    \end{equation*}
    The coordinate form of above equation is
    \begin{eqnarray*}
        2\frac{\kappa+1}{\kappa-1}w^*(1) - w^*(2)  &=&  1,  \\
        w^*(k+1) - 2\frac{\kappa+1}{\kappa-1}w^*(k) + w^*(k-1)  &=&  0,
        2 \leq k \leq d-2, \\
        -x^*(d-1) + \left(\frac{4}{\kappa-1} + \frac{\sqrt{\kappa}+3}{\sqrt{\kappa}+1}\right)x^*(d) &=& 0,
    \end{eqnarray*}
    and let $q$ be the smallest root of the following equation
    \begin{equation*}
        q^2 - \frac{2\kappa+2}{\kappa-1}q + 1 = 0,
    \end{equation*}
    that is $q = \frac{\sqrt{\kappa}-1}{\sqrt{\kappa}+1}$. Then $w^*$ satisfies $w^*(i)=q^i$ for $1 \leq i\leq d$. Hence,
    \begin{eqnarray*}
        \|w^*\|^2  = \sum_{i=1}^{d} [w^*(i)]^2 = \sum_{i=1}^{d} q^{2i} = \frac{q^2(1-q^{2d})}{1-q^2}
    \end{eqnarray*}

    Let $w^{(k)}$ be any point in $\W^{(k)}$ after $k$ rounds iterations ($k \le d$), applying Corollary \ref{lemma4} we have
    \begin{eqnarray*}
            \|w^{(k)} - w^*\|^2
        &\geq& \sum_{i=k+1}^{d} [w^*(i)]^2 = \sum_{i=k+1}^{d} q^{2i}  \\
        &=& \frac{q^{2(k+1)}[1-q^{2(d-k+2)}]}{1-q^2} \\
        &=& \frac{1-q^{2(d-k+2)}}{1-q^{2d}}q^{2k}\|w^*\|^2 \\
        &\ge& \frac{1-q^4}{1-q^{2d}}q^{2k}\|w^*\|^2 \\
        &\ge& \frac{1-q}{1}q^{2k}\|w^*\|^2
    \end{eqnarray*}
    Combing the above inequality and the optimal condition of strongly-convex function, we obtain
    \begin{eqnarray*}
        f(w^{(k)}) - f(w^*)
        &\geq& \frac{\lambda}{2}  \|w^{(k)} - w^*\|^2 \nonumber \\
        &\geq& \frac{\lambda(1-q)}{2} q^{2k}\|w^*\|^2 \nonumber \\
        &=& \frac{\lambda}{2} \frac{2}{\sqrt{\kappa}+1} \left( \frac{\sqrt{\kappa}-1}{\sqrt{\kappa}+1} \right)^{2k} \|w^*\|^2 \nonumber \\
        &\ge& \frac{\lambda}{\sqrt{\kappa} + 1} \exp\left(-\frac{4k}{\sqrt{\kappa} + 1}\right)\|w^*\|^2.
    \end{eqnarray*}

    Thus if $f(w^{(k)}) - f(w^*) \le \epsilon$, then we have
    \begin{align*}
    k \geq ~ \frac{\sqrt{\kappa} - 1}{4} \log \left( \frac{\lambda \|w^*\|^2 }{(\sqrt{\kappa} + 1)\epsilon}\right) 
    = ~ \Omega \left( \sqrt{\kappa} \log \left( \frac{\lambda \|w^*\|}{\epsilon}\right) \right),
    \end{align*}
    which completes the proof.

\subsection{Proof of Theorem \ref{thm6}}
With a slight abuse of notation, we construct the following separable strongly convex function:
\begin{equation}
f(w) := \frac{1}{m} \sum_{j = 1}^{m} \phi_j(w_j), 
\end{equation}
where $w^T = [w_1^T, \dots, w_m^T]$ and $\phi_j(w_j)$ is also a separable strongly convex function with form of 
\begin{align}
\phi_j(w_j) = \sum_{i = 1}^{\frac{n}{m}} \bigg[ \frac{\lambda(\kappa-1)}{4} \left(\frac{1}{2}w_{j,i}^TA_{j,i}w_{j,i} -  \langle e_1,w_{j,i} \rangle\right) 
       + \frac{\lambda}{2}\|w_{j,i}\|^2 \bigg], 
\end{align}
where $w_j^T = [w_{j,1}^T, \dots, w_{j,n}^T]$ and  $A_{j,i}$ is a tridiagonal matrix in $\R^{d_j \times d_j}$ given by 
\begin{equation*}
    A_{j,i} = \left[\begin{matrix}
            ~~~2 &   -1 &  ~~~0 &  \cdots & ~~~0 & ~~~0 & ~~~0 \\
              -1 & ~~~2 &    -1 &  \cdots & ~~~0 & ~~~0 & ~~~0 \\
            ~~~0 &   -1 &  ~~~2 &  \cdots & ~~~0 & ~~~0 & ~~~0 \\
             ~~\cdots & ~~\cdots & ~~\cdots & \cdots & ~~\cdots & ~~\cdots & ~~\cdots \\
            ~~~0 & ~~~0 &  ~~~0 & \cdots &   -1 & ~~~2 &   -1 \\
            ~~~0 & ~~~0 &  ~~~0 & \cdots & ~~~0 &   -1 & \frac{\sqrt{\kappa}+3}{\sqrt{\kappa}+1} \\
    \end{matrix}\right].
\end{equation*}

It is simple to veriy that $f$ is a $\lambda$-strongly convex function with condition number $\kappa$. Note that $f$ is a quadratic function with its coefficient matrix being a block diagonal matrix, in which each block matrix is tridiagonal. As $f$ is separable, the setting here can be viewed as each machine $j$ has all information (all data samples in all coordinates) of $f$'s component $\phi_j$, and all machines simultaneously do non-distributed incremental opitmizaion over their local data. To get the lower bound, note that in ``clever" algorithms, machine $j$ only chooses data samples among the ones corresponding to $\phi_j$ in each round, then we have 
    \begin{eqnarray*}
            &&\bbe[\|w^{(k)} - w^*\|^2] \\
        &\geq& \bbe\left[\sum_{j = 1}^{m} \|w_{j}^{(k)} - w_j^* \|^2\right] \\
        &\geq& \sum_{j = 1}^{m} \left[\frac{1}{2} \exp\left(-\frac{4k\sqrt{\kappa}}{n(\sqrt{\kappa} + 1)^2 - 4\sqrt{\kappa}}\right)\|w^*_{j}\|^2\right] \\
        &=& \frac{1}{2} \exp\left(-\frac{4k\sqrt{\kappa}}{n(\sqrt{\kappa} + 1)^2 - 4\sqrt{\kappa}}\right)\|w^*\|^2.
    \end{eqnarray*}
The second inequality is according to \cite[Theorem 3]{lan2015optimal} and Corollary \ref{lemma4}. 

Finally by \cite[Corollary 3]{lan2015optimal}, we get if $\bbe \left[ f(w^{(k)}) - f(w^*) \right] \le \epsilon$, then
\begin{equation}
    k \ge \Omega \left( \left(\sqrt{n\kappa} + n \right) \log \left( \frac{{\|w^{*}\|} \lambda}{\epsilon} \right) \right),
\end{equation}
for sufficiently large $d$.

\section{Conclusion}
In this paper we have defined two classes of distributed optimization algorithms under the setting where data is partitioned on features: one is a family of non-incremental algorithms and the other is incremental. We have presented tight lower bounds on communication rounds for non-incremental class of algorithms. We have also provided one lower bound for incremental class of algorithms but whether it is tight remains open. 

The tightness informs that one should break our definition when trying to design optimization algorithms with less communication rounds than existing algorithms. We also emphasize that our lower bounds are important as they can provide deeper understanding on the limits of some ideas or techniques used in distributed optimization algorithms, which may provide some insights for designing better algorithms.

To the best of our knowledge, this is the first work to study communication lower bounds for distributed optimization algorithms under the setting where data is partitioned on features.


\bibliography{sample}

\begin{thebibliography}{17}
\providecommand{\natexlab}[1]{#1}
\providecommand{\url}[1]{\texttt{#1}}
\expandafter\ifx\csname urlstyle\endcsname\relax
  \providecommand{\doi}[1]{doi: #1}\else
  \providecommand{\doi}{doi: \begingroup \urlstyle{rm}\Url}\fi

\bibitem[Arjevani and Shamir(2015)]{Ohad}
Y.~Arjevani and O.~Shamir.
\newblock Communication complexity of distributed convex learning and
  optimization.
\newblock In \emph{Advances in Neural Information Processing Systems}, pages
  1747--1755, 2015.

\bibitem[Balcan et~al.(2012)Balcan, Blum, Fine, and
  Mansour]{balcan2012distributed}
M.-F. Balcan, A.~Blum, S.~Fine, and Y.~Mansour.
\newblock Distributed learning, communication complexity and privacy.
\newblock \emph{arXiv preprint arXiv:1204.3514}, 2012.

\bibitem[Boyd et~al.(2011)Boyd, Parikh, Chu, Peleato, and
  Eckstein]{boyd2011distributed}
S.~Boyd, N.~Parikh, E.~Chu, B.~Peleato, and J.~Eckstein.
\newblock Distributed optimization and statistical learning via the alternating
  direction method of multipliers.
\newblock \emph{Foundations and Trends{\textregistered} in Machine Learning},
  3\penalty0 (1):\penalty0 1--122, 2011.

\bibitem[Dean and Ghemawat(2008)]{dean2008mapreduce}
J.~Dean and S.~Ghemawat.
\newblock Mapreduce: simplified data processing on large clusters.
\newblock \emph{Communications of the ACM}, 51\penalty0 (1):\penalty0 107--113,
  2008.

\bibitem[Jaggi et~al.(2014)Jaggi, Smith, Tak{\'a}c, Terhorst, Krishnan,
  Hofmann, and Jordan]{jaggi2014communication}
M.~Jaggi, V.~Smith, M.~Tak{\'a}c, J.~Terhorst, S.~Krishnan, T.~Hofmann, and
  M.~I. Jordan.
\newblock Communication-efficient distributed dual coordinate ascent.
\newblock In \emph{Advances in Neural Information Processing Systems}, pages
  3068--3076, 2014.

\bibitem[Lan(2015)]{lan2015optimal}
G.~Lan.
\newblock An optimal randomized incremental gradient method.
\newblock \emph{arXiv preprint arXiv:1507.02000}, 2015.

\bibitem[Lee and Roth(2015)]{lee2015distributed}
C.-p. Lee and D.~Roth.
\newblock Distributed box-constrained quadratic optimization for dual linear
  svm.
\newblock In \emph{Proceedings of the 32nd International Conference on Machine
  Learning (ICML-15)}, pages 987--996, 2015.

\bibitem[Lee et~al.(2015)Lee, Ma, and Lin]{jlee2015distributed}
J.~Lee, T.~Ma, and Q.~Lin.
\newblock Distributed stochastic variance reduced gradient methods and a lower
  bound for communication complexity.
\newblock \emph{arXiv preprint arXiv:1507.07595}, 2015.

\bibitem[Ma and Tak{\'a}{\v{c}}(2016)]{DISCO-F}
C.~Ma and M.~Tak{\'a}{\v{c}}.
\newblock Distributed inexact damped newton method: Data partitioning and
  load-balancing.
\newblock \emph{arXiv preprint arXiv:1603.05191}, 2016.

\bibitem[Ma et~al.(2015)Ma, Smith, Jaggi, Jordan, Richt{\'a}rik, and
  Tak{\'a}{\v{c}}]{ma2015adding}
C.~Ma, V.~Smith, M.~Jaggi, M.~I. Jordan, P.~Richt{\'a}rik, and
  M.~Tak{\'a}{\v{c}}.
\newblock Adding vs. averaging in distributed primal-dual optimization.
\newblock \emph{arXiv preprint arXiv:1502.03508}, 2015.

\bibitem[Mare{\v{c}}ek et~al.(2015)Mare{\v{c}}ek, Richt{\'a}rik, and
  Tak{\'a}{\v{c}}]{Distributed-block-coordinate-descent}
J.~Mare{\v{c}}ek, P.~Richt{\'a}rik, and M.~Tak{\'a}{\v{c}}.
\newblock Distributed block coordinate descent for minimizing partially
  separable functions.
\newblock In \emph{Numerical Analysis and Optimization}, pages 261--288.
  Springer, 2015.

\bibitem[Necoara and Clipici(2013)]{1312.5302}
I.~Necoara and D.~Clipici.
\newblock Parallel coordinate descent methods for composite minimization:
  convergence analysis and error bounds.
\newblock 2013.

\bibitem[Nesterov(2013)]{nesterov2013introductory}
Y.~Nesterov.
\newblock \emph{Introductory lectures on convex optimization: A basic course},
  volume~87.
\newblock Springer Science \& Business Media, 2013.

\bibitem[Richt{\'a}rik and Tak{\'a}{\v{c}}(2013)]{richtarik2013distributed}
P.~Richt{\'a}rik and M.~Tak{\'a}{\v{c}}.
\newblock Distributed coordinate descent method for learning with big data.
\newblock \emph{arXiv preprint arXiv:1310.2059}, 2013.

\bibitem[Yang(2013)]{yang2013trading}
T.~Yang.
\newblock Trading computation for communication: Distributed stochastic dual
  coordinate ascent.
\newblock In \emph{Advances in Neural Information Processing Systems}, pages
  629--637, 2013.

\bibitem[Zhang and Xiao(2015)]{DISCO}
Y.~Zhang and L.~Xiao.
\newblock Communication-efficient distributed optimization of self-concordant
  empirical loss.
\newblock \emph{arXiv preprint arXiv:1501.00263}, 2015.

\bibitem[Zhang et~al.(2012)Zhang, Wainwright, and
  Duchi]{zhang2012communication}
Y.~Zhang, M.~J. Wainwright, and J.~C. Duchi.
\newblock Communication-efficient algorithms for statistical optimization.
\newblock In \emph{Advances in Neural Information Processing Systems}, pages
  1502--1510, 2012.

\end{thebibliography}

\end{document}